\numberwithin{equation}{section}
\newtheorem{theorem}{Theorem}[section]
\newtheorem{remark}[theorem]{Remark}
\newtheorem{definition}[theorem]{Definition}
\newtheorem{lemma}[theorem]{Lemma}
\newtheorem{conjecture}[theorem]{Conjecture}
\newtheorem{hypothesis}[theorem]{Hypothesis}
\newtheorem{ansatz}[theorem]{Ansatz}
\begin{document}

\thispagestyle{plain}
\begin{center}
   
    \textbf{\Large Entropy, Thermodynamics and the Geometrization of the Language Model}
        
    \vspace{0.4cm}
    \textbf{Wenzhe Yang}
    
    \vspace{0.1cm}
    \texttt{wenzheyang87@gmail.com}
    
    \vspace{0.1cm}
    
    \url{https://orcid.org/0000-0002-2456-7453}

\end{center}

\begin{abstract}

In this paper, we discuss how pure mathematics and theoretical physics can be applied to the study of language models. Using set theory and analysis, we formulate mathematically rigorous definitions of language models, and introduce the concept of the moduli space of distributions for a language model. We formulate a generalized distributional hypothesis using functional analysis and topology. We define the entropy function associated with a language model and show how it allows us to understand many interesting phenomena in languages. We argue that the zero points of the entropy function and the points where the entropy is close to 0 are the key obstacles for an LLM to approximate an intelligent language model, which explains why good LLMs need billions of parameters. Using the entropy function, we formulate a conjecture about AGI. 

Then, we show how thermodynamics gives us an immediate interpretation to language models. In particular we will define the concepts of partition function, internal energy and free energy for a language model, which offer insights into how language models work. Based on these results, we introduce a general concept of the geometrization of language models and define what is called the Boltzmann manifold. While the current LLMs are the special cases of the Boltzmann manifold.

\vspace{0.3cm}

\noindent \textbf{Keywords}: Language Model, Moduli Space, Entropy Function, Thermodynamics, Geometrization of Language Model, Boltzmann Manifold.

\end{abstract}

\section{Introduction}

In November 2022, OpenAI surprised the world with the release of the highly ``intelligent'' ChatGPT, which possesses many striking abilities that far exceed any other available systems \cite{OpenAI}. For example, it can answer questions with a very high accuracy and engage in human-like conversations. It is the first time that a computer performs well enough in human languages that makes people wonder whether it acquires true intelligence. Ever since then, the arm race in the area of Large Language Model (LLM) has become extremely fierce, with the announcements of new LLMs with hundreds of billions of parameters regularly. Now there are abundant Chatbots based on LLMs available in markets, many of which have astonishing new abilities and behaviors \cite{EmergentQuanta,ChatBot}. Especially, when the number of parameters (e.g., weights) are increased to a threshold, then some surprising behaviors emerges \cite{Emergent1}, which is extremely intriguing. 

An LLM is a generative model, which means it can practically sample a new sentence given a prompt. More precisely, suppose we are given an arbitrary prompt $w_1\cdots w_n$, where each $w_i$ is a word, an LLM can compute the probability distribution of the next word, i.e., $ P(w_{n+1}|w_1 \cdots w_{n})$. Next, we sample the next word $w_{n+1}$ according to this probability distribution \cite{Douglas,Stanford}. After we obtain $w_{n+1}$, we repeat this process and sample $w_{n+2}$ according to the probability distribution $P(w_{n+2}|w_1 \cdots w_{n} w_{n+1})$ computed by the LLM. We repeat this process until some conditions are met, e.g., the LLM outputs an end-of-sequence symbol.

Let us now briefly review the basic structures of the LLMs based on the transformer architecture, to the extent that will be needed in this paper. For a more detailed and thorough treatment, the readers are referred to the book \cite{Stanford}. First, in order for a computer to be able to compute words, words are first embedded into a vector space, which is called the Word-Embedding \cite{MikolovWE}. Suppose $\mathcal{W}$ is the finite set of all words, a Word-Embedding is an injective map
\begin{equation}
    \iota: \mathcal{W} \rightarrow \mathbb{R}^N, \nonumber
\end{equation}
where the standard inner product $\langle \cdot, \cdot \rangle$ on $\mathbb{R}^N$ measures the correlations between words. Given an arbitrary vector $\mathbf{v} \in \mathbb{R}^N$, it defines a distribution on $\mathcal{W}$ via
\begin{equation} \label{eq:BoltzmannD}
    P_{\mathbf{v}}(w) = \frac{\exp \left( \langle \mathbf{v}, \iota(w) \rangle \right)}{\sum_{w' \in \mathcal{W}} \exp \left( \langle \mathbf{v}, \iota(w') \rangle \right)},
\end{equation}
which is called the Boltzmann distribution in statistical mechanics \cite{Muller}.

In particular, given a word $w \in \mathcal{W}$, it also gives us a distribution on $\mathcal{W}$ via
\begin{equation} 
    P_{\iota(w)}(w') = \frac{\exp \left( \langle \iota(w), \iota(w') \rangle \right)}{\sum_{w'' \in \mathcal{W}} \exp \left( \langle \iota(w), \iota(w'') \rangle \right)}, \nonumber
\end{equation}
the value of which measures the correlations between $w$ and $w'$. In practice, the embedding $\iota$ is chosen such that $P_{\iota(w)}(w')$ is the probability that $w'$ is in an $l$-neighborhood of $w$ \cite{Stanford,MikolovWE}. One nice property of this embedding is that generally the embedded vectors $\{ \iota(w)| w \in \mathcal{W} \}$ lie in a lower dimensional vector subspace of $\mathbb{R}^N$ as these vectors are not independent from each other. A famous linear relation is 
\begin{equation}
    \iota(\text{Queen}) - \iota(\text{Woman}) = \iota(\text{King}) - \iota(\text{Man}). \nonumber
\end{equation}
For more details, the readers are referred to the papers \cite{WE1,Douglas,MikolovWE,GloVe}.

After the Word-Embedding, a sentence $w_1\cdots w_n$ is mapped to a vector
\begin{equation}
    (\iota(w_1),\cdots,\iota(w_n)) \in (\mathbb{R}^N)^n. \nonumber 
\end{equation}
As the name suggests, a transformer is a transformation
\begin{equation}
    T_n: (\mathbb{R}^N)^n \rightarrow (\mathbb{R}^N)^n, \nonumber
\end{equation}
which is a differentiable often smooth map \cite{Transformer}. For the details of the construction of a transformer and some of the ideas behind it, the readers are referred to the paper \cite{Transformer} and the book \cite{Stanford}. In particular, each transformer depends on internal parameters which are tuned during the training process. Notice that transformers can be nested in the form
\begin{equation}
    \mathcal{T}_n = T_{n,k}\circ \cdots \circ T_{n,1}, \nonumber 
\end{equation}
where $k$ is the number of layers and different layers have different parameters. Let $\text{Proj}$ be the projection of $(\mathbb{R}^N)^n$ to the last $\mathbb{R}^N$ in the direct product, hence $\text{Proj} \circ \mathcal{T}(\iota(w_1),\cdots,\iota(w_n))$ is a vector of $\mathbb{R}^N$, which defines a distribution on $\mathcal{W}$ via Eq. \eqref{eq:BoltzmannD}. The training of the transformers on a very large corpse is to make sure that this distribution is a good approximation of $P(\cdot|w_1\cdots w_n)$ \cite{Stanford}.

The LLMs have made rapid advance from a numerical and experimental point of view, but there have been very few papers that attempt to understand the properties and behaviors of LLMs from a analytical point of view. More precisely, we now desperately need a theory that can explain these surprising properties and behaviors, which is the main motivation of this paper.

In this paper, we first use set theory, analysis and probability theory to rigorously define the concept of Causal Language Model (CLM), and show it is equivalent to the Predicative Language Model (PLM), which basically predicts the next word of a prompt. We formulate a generalized distributional hypothesis for a CLM based on functional analysis and topology. We introduce the concept of the moduli space of distributions on words, which is the key metric space to understand the power and properties of a language model.

The development of human language is for human to exchange information efficiently between individuals. In order to analytically study a language model, we must understand how information is quantified during conversations. In this paper, we introduce the entropy function for CLM and PLM, and show how it can play a crucial role in the study of language models. In particular, we analyze the zero points of the entropy function, and show how to use information theory to understand the meaning of them. We argue that the zero points of the entropy function and the points where the entropy is close to 0 are the key obstacles for an LLM to approximate an intelligent language model, which explains why good LLMs need billions of parameters. We also formulate an extension conjecture about the properties of AGI using the entropy function.

Next, we use thermodynamics to give an physical interpretation of the language models. In this interpretation, a sentence is a microstate that has a potential energy. All the possible outputs of a prompt define a statistical ensemble, which is distributed according to the Boltzmann distribution \cite{Muller}. We introduce the concept of the partition function, internal energy and Helmholtz free energy for this statistical ensemble associated. Using the Helmholtz free energy, we show how the game of word-predicting can be interpreted as the process of a molecule growing in physics.

Later, we formulate the concept of the geometrization of a language model, and show that the accuracy of a language model is determined by how precise the embedding of the moduli space in a geometrization is. We also define the Word-Embedding with respect to a general manifold with a pairing structure, and introduce the corresponding Boltzmann manifold with respect to it. At last, we show how the current LLMs can be viewed as special cases of the geometrization of language models that use the linear space together with its inner product. Based on our theory, we also pose several important open questions, e.g., which manifold together with a pairing is optimal for the geometrization of language models. Perhaps different languages would need different manifolds and pairings, namely different Boltzmann manifolds in the geometrization. Another important direction is whether the current researches on statistical mechanics and differential geometry can offer new tools in the study of LLMs.

The layout of this paper is as follows. In Section \ref{sec:GLM}, we use set theory, analysis and probability theory to rigorously define language models, i.e., CLM and PLM, and we introduce the moduli space of distributions associated with a language model. In Section \ref{sec:Entropy}, we introduce the entropy function, and show how information theory can be applied to the study of language models. In Section \ref{sec:Thermo}, we present a thermodynamic interpretation of language models and define statistical functions such as partition function, internal energy and Helmholtz free energy for a language model. In Section \ref{sec:GeoLM}, we formulate the concept of the geometrization of language models, and define the Boltzmann manifold. In Section \ref{sec:LLM}, we show that the current LLMs are special cases of the geometrization of language models. In Section \ref{sec:conclusion}, we conclude this paper and propose several open problems.

\section{The Language Model and Moduli Space} \label{sec:GLM}

In this section, we will use set theory, analysis and probability theory to give rigorous definitions of language models. We first define what is a general sentence in the sense of set theory \cite{Logic,Manin,MilneGT}. Then we introduce the definitions of the General Language Model (GLM), the Causal Language Model (CLM) and the Predicative Language Model (PLM). At last, we define what is called the moduli space of distributions for a PLM, which is the crucial metric space to understand the properties and behaviors of language models.

\subsection{Sentence as a Free Mathematical Sequence?}

Suppose $\mathcal{W}$ is the finite set of all words in a language, the set of all sentences associated with $\mathcal{W}$ is defined in the same way as in set theory and logic \cite{Logic,Manin,MilneGT}. First, let $\emptyset$ be the unique sentence with no word, whose length is by definition 0. Let $\mathcal{S}_0$ be the set that consists of a unique element $\emptyset$
\begin{equation}
    \mathcal{S}_0 = \{ \emptyset \}. \nonumber
\end{equation}
\begin{remark}
The inclusion of $\emptyset$ as a sentence will make our analysis simpler from a mathematical point of view. While later we will also see $\emptyset$ is a suitable symbol for the end (or even the beginning) of a sentence.
\end{remark}
A sentence of length 1 is just a word of $\mathcal{W}$. Let $\mathcal{S}_1$ be the set of all sentences of length 1, namely
\begin{equation}
    \mathcal{S}_1 = \mathcal{W}. \nonumber 
\end{equation}
Given an positive integer $n \geq 2$, let $\mathcal{S}_n$ be the set of all sentences of length $n$, i.e., an element of $\mathcal{S}_n$ is a sequence
\begin{equation}
    \mathbf{s}_n=w_1 w_2 \cdots w_n~\text{with}~w_i \in \mathcal{W}. \nonumber 
\end{equation}
Here, the words in a sentence do not need to be different from each other, e.g., we allow the case where $w_i=w_j$, $i \neq j$. The set of all sentences of finite lengths is by definition the union of all $\mathcal{S}_n$
\begin{equation}
    \mathcal{S} = \bigcup_{n=0}^\infty \mathcal{S}_n. \nonumber
\end{equation}
Notice that here we allow the length of a sentence to be arbitrarily large. 
\begin{remark}
In the definition of $\mathcal{S}$, we have allowed the existences of arbitrary sentences that do not satisfy grammar rules. In this paper, the grammar is instead considered as part of the language model.
\end{remark}

\begin{lemma}
The cardinality of $\mathcal{S}$ is countably infinite, i.e., there exists a bijective map between $\mathcal{S}$ and $\mathbb{N}$.
\end{lemma}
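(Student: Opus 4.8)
The plan is to write $\mathcal{S}$ as a countable, pairwise–disjoint union of finite sets, which makes it at most countably infinite, and then to exhibit an injection $\mathbb{N}\hookrightarrow\mathcal{S}$, which makes it infinite; the two observations together force $\mathcal{S}$ to be countably infinite. Throughout, fix an enumeration $\mathcal{W}=\{a_1,\dots,a_m\}$ of the word set, where $m=|\mathcal{W}|\ge 1$ (the word set of a genuine language being nonempty); fixing this single finite ordering is all the ``choosing'' the argument needs, so no appeal to the axiom of choice is required.

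First I would record that each $\mathcal{S}_n$ is finite. For $n\ge 1$ the assignment $w_1w_2\cdots w_n\mapsto(w_1,\dots,w_n)$ is a bijection $\mathcal{S}_n\xrightarrow{\ \sim\ }\mathcal{W}^{\,n}$, so $|\mathcal{S}_n|=m^{\,n}$, while $|\mathcal{S}_0|=1$. Since the length of a sentence is a well-defined nonnegative integer, the sets $\mathcal{S}_n$ are pairwise disjoint, and hence $\mathcal{S}=\bigsqcup_{n\ge 0}\mathcal{S}_n$ is a union of countably many finite sets. Concretely, one lists $\mathcal{S}$ by first writing the unique element of $\mathcal{S}_0$, then the $m$ elements of $\mathcal{S}_1$ in the lexicographic order induced by $a_1<\dots<a_m$, then the $m^2$ elements of $\mathcal{S}_2$ in lexicographic order, and so on; this produces a sequence in which every sentence occurs exactly once, i.e.\ a bijection of $\mathcal{S}$ onto an initial segment of $\mathbb{N}$. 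In particular $|\mathcal{S}|\le\aleph_0$.

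Next I would show $\mathcal{S}$ is infinite. Pick any $w_0\in\mathcal{W}$ and consider $n\mapsto s_n$, where $s_n=\underbrace{w_0w_0\cdots w_0}_{n}\in\mathcal{S}_n$ is the length-$n$ sentence all of whose letters equal $w_0$ (with $s_0=\emptyset$). Distinct $n$ give sentences of distinct lengths, hence distinct elements of $\mathcal{S}$, so this is an injection $\mathbb{N}\hookrightarrow\mathcal{S}$ and $\mathcal{S}$ is infinite. Combining this with the previous paragraph, $\mathcal{S}$ is an infinite set of cardinality at most $\aleph_0$, hence $|\mathcal{S}|=\aleph_0$; equivalently, the enumeration constructed above exhausts all of $\mathbb{N}$ and is the desired bijection $\mathcal{S}\cong\mathbb{N}$.

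There is no serious obstacle here: the statement is a standard countability fact and every step is routine. The only points deserving a word of care are (i) insisting that $\mathcal{W}\neq\emptyset$, since otherwise $\mathcal{S}=\{\emptyset\}$ is finite and the lemma fails, and (ii) being explicit enough about the ``sentence as a sequence'' formalism that $\mathcal{S}_n\cong\mathcal{W}^{\,n}$ and the disjointness of the $\mathcal{S}_n$ hold literally. If one prefers a single closed-form bijection over the block-by-block listing, the mixed-radix (base-$m$) numeral encoding of a word string as an integer also works, but it is not needed.
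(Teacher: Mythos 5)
Your proof is correct and is essentially the standard elementary set-theory argument (a countable union of finite sets is countable, plus an injection from $\mathbb{N}$) that the paper simply delegates to the cited textbook with the remark that the claim is immediate since $\mathcal{W}$ is finite. You fill in the details the paper omits, including the worthwhile caveat that $\mathcal{W}$ must be nonempty for the lemma to hold; otherwise there is nothing to add.
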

\begin{proof}
Since $\mathcal{W}$ is a finite set, this lemma is an immediate result of elementary set theory, see the book \cite{Halmos} for more details.
\end{proof}

Even in this most general setting, we do not consider the case of sentence of infinite length, i.e., infinite sequence, which is very important from both practical and set-theoretic points of view. Practically, human brain has not evolved to understand infinity properly, so it makes no sense to a mortal that a sentence is infinite. Set-theoretically, the set of all infinite sequences has the same cardinality as the real numbers, i.e., its cardinality is $\aleph_1$ \cite{Halmos}. So we do not really want to walk into the deep water of axiomatic set theory.

There is a natural operation defined on $\mathcal{S}$ that is called concatenation: given a sentence $s_m \in \mathcal{S}_m$ and a sentence $s_n \in \mathcal{S}_n$, their concatenation is a new sentence $s_m s_n$ that lies in $\mathcal{S}_{m+n}$. As a convention, the concatenation between $\emptyset$ and $s_n$ is $s_n$ itself, i.e.,
\begin{equation}
    \emptyset \mathbf{s}_n = \mathbf{s}_n \emptyset = \mathbf{s}_n. \nonumber 
\end{equation}
This concatenation operation is associative, but not commutative, while the empty sentence $\emptyset$ serves as a two-sided identity. So the set $\mathcal{S}$ has a monoid structure \cite{MilneGT}.

\subsection{The Causal Language Model}

Now we are ready to give a more mathematical and general definition of the language model.

\begin{definition} \label{defn:DoubleLM}
Given an arbitrary sentence $s \in \mathcal{S}$, a General Language Model (GLM) $\mathscr{L}_G$ defines a joint probability distribution $P_{\mathscr{L}_G,\mathbf{s}}$ on $\mathcal{S} \times \mathcal{S}$, where the probability $P_{\mathscr{L}_G,\mathbf{s}}(\mathbf{s}_1,\mathbf{s}_2)$ measures the likelihood of the concatenation $\mathbf{s}_1\mathbf{s}\mathbf{s}_2$ among all possible choices. Namely, $P_{\mathscr{L}_G,\mathbf{s}}(\mathbf{s}_1,\mathbf{s}_2)$ is the probability that there is a sentence $\mathbf{s}_1$ before $\mathbf{s}$ and a sentence $\mathbf{s}_2$ after $\mathbf{s}$.
\end{definition}

Now, we consider the case where the sentence $\mathbf{s}_1$ on the left side of $\mathbf{s}$ is $\emptyset$, in which case the probability distribution in Definition \ref{defn:DoubleLM} defines a distribution on $\mathcal{S}$. Hence it motivates the following definition.

\begin{definition}
Given an arbitrary sentence $\mathbf{s} \in \mathcal{S}$, a Causal Language Model (CLM) $\mathscr{L}$ defines a probability distribution on $\mathcal{S}$. Namely, given a prompt $\mathbf{s}$, we have a distribution $P(\cdot|\mathbf{s})$ such that the probability that $\mathscr{L}$ outputs a sentence $\mathbf{s}'$ is $P_{\mathscr{L}}(\mathbf{s}'|\mathbf{s})$.
\end{definition}

In the definition, we allow the prompt $\mathbf{s}$ to be $\emptyset$, in which case the CLM $\mathscr{L}$ defines a probability distribution on $\mathcal{S}$, i.e., $P(\mathbf{s}_{\text{out}}|\emptyset)$. It is called causal because $\mathbf{s}'$ is after $\mathbf{s}$, so there is a causality between $\mathbf{s}$ and $\mathbf{s}'$. From the definition, the input to $\mathscr{L}$ can be of arbitrary length, while the output sentence can also be of arbitrary length. In fact, such a definition of CLM is extremely powerful.

Let $\mathcal{B}(\mathcal{S},\mathbb{R})$ be the set of all functions with finite absolute summation, i.e.,
\begin{equation}
    \mathcal{B}(\mathcal{S},\mathbb{R}) = \left\{f:\mathcal{S} \rightarrow \mathbb{R} \bigg| \sum_{\mathbf{s} \in \mathcal{S}}|f(\mathbf{s})| < \infty \right\}. \nonumber 
\end{equation}
The set $\mathcal{B}(\mathcal{S},\mathbb{R})$ is a metric space with a metric given by
\begin{equation}
    \left \lVert f-g \right \rVert  =  \sum_{\mathbf{s} \in \mathcal{S}} \left | f(\mathbf{s}) - g(\mathbf{s})\right |. \nonumber 
\end{equation}
For more details about this space, the readers are referred to the book \cite{SteinReal}. Clearly for every $\mathbf{s}$, the distribution $P_{\mathscr{L}}(\cdot|\mathbf{s})$ is a non-negative function that lies in $\mathcal{B}(\mathcal{S},\mathbb{R})$, in fact $ \left \lVert f \right \rVert = 1 $. Therefore, there exists a map
\begin{equation} \label{eq:DistributionHypothesis}
    \rho_{\mathscr{L}}: \mathcal{S} \rightarrow \mathcal{B}(\mathcal{S},\mathbb{R}).
\end{equation}

In 1950s, linguists formulated the distributional hypothesis, which basically says that the meaning of words are determined by their contexts \cite{Firth,Harris,Joos}. We generalize this hypothesis using real analysis.
\begin{conjecture}
If $\mathscr{L}$ is an intelligent language model, then the map $\rho_{\mathscr{L}}$ is an embedding that is also discrete. Namely, each point $\rho_{\mathscr{L}}(\mathbf{s})$ has a neighborhood that does not contain any other $\rho_{\mathscr{L}}(\mathbf{s}')$ where $\mathbf{s} \neq \mathbf{s}'$. 
\end{conjecture}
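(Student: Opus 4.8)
The plan is to first reduce the statement to something checkable. Since $\mathcal{S}$ is countably infinite and the only natural topology on it is the discrete one, asserting that $\rho_{\mathscr{L}}$ is a discrete embedding amounts to two things: that $\rho_{\mathscr{L}}$ is injective, and that its image is a discrete subset of the metric space $\mathcal{B}(\mathcal{S},\mathbb{R})$ --- which is exactly the ``Namely'' clause, that each $\rho_{\mathscr{L}}(\mathbf{s})$ is isolated in $\rho_{\mathscr{L}}(\mathcal{S})$. The catch is that ``$\mathscr{L}$ is intelligent'' is not yet a mathematical hypothesis, so the first real step is to extract the minimal properties of an intelligent model that the argument will consume. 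I would take two. \emph{(I) Separation}: for any $\mathbf{s}_1 \neq \mathbf{s}_2$ there is a continuation $\mathbf{s}'$ with $P_{\mathscr{L}}(\mathbf{s}'|\mathbf{s}_1) \neq P_{\mathscr{L}}(\mathbf{s}'|\mathbf{s}_2)$ --- no two distinct sentences are perfectly interchangeable as contexts. \emph{(II) Uniform self-retrievability}: to each $\mathbf{s}$ one attaches a ``diagnostic event'' $A_{\mathbf{s}} \subseteq \mathcal{S}$ --- for instance the outputs that reproduce $\mathbf{s}$ (or a fixed invertible encoding of $\mathbf{s}$) as a prefix --- so that $\mathbf{s} \mapsto A_{\mathbf{s}}$ has pairwise disjoint values, $P_{\mathscr{L}}(A_{\mathbf{s}}\mid\mathbf{s}) \geq \delta(\mathbf{s})$ for some $\delta(\mathbf{s})>0$, and $P_{\mathscr{L}}(A_{\mathbf{s}}\mid\mathbf{t}) \leq \tfrac12 \delta(\mathbf{s})$ for every $\mathbf{t} \neq \mathbf{s}$.

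Granting (I) and (II), the rest is routine. Injectivity is immediate: if $\rho_{\mathscr{L}}(\mathbf{s}_1)=\rho_{\mathscr{L}}(\mathbf{s}_2)$ then $P_{\mathscr{L}}(\mathbf{s}'|\mathbf{s}_1)=P_{\mathscr{L}}(\mathbf{s}'|\mathbf{s}_2)$ for \emph{every} $\mathbf{s}'\in\mathcal{S}$, which contradicts (I) unless $\mathbf{s}_1=\mathbf{s}_2$. For the discreteness, I would fix $\mathbf{s}$, take an arbitrary $\mathbf{t}\neq\mathbf{s}$, and bound the $\ell^1$-distance from below by restricting the sum to $A_{\mathbf{s}}$ and invoking (II):
\begin{equation}
\left\lVert \rho_{\mathscr{L}}(\mathbf{s}) - \rho_{\mathscr{L}}(\mathbf{t}) \right\rVert \;\geq\; \sum_{\mathbf{s}' \in A_{\mathbf{s}}} \left| P_{\mathscr{L}}(\mathbf{s}'|\mathbf{s}) - P_{\mathscr{L}}(\mathbf{s}'|\mathbf{t}) \right| \;\geq\; P_{\mathscr{L}}(A_{\mathbf{s}}\mid\mathbf{s}) - P_{\mathscr{L}}(A_{\mathbf{s}}\mid\mathbf{t}) \;\geq\; \tfrac12\,\delta(\mathbf{s}). \nonumber
\end{equation}
So the open ball of radius $\tfrac12\delta(\mathbf{s})$ about $\rho_{\mathscr{L}}(\mathbf{s})$ contains no other image point, which is the isolatedness clause; together with injectivity this is the discrete embedding.

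The hard part is not this estimate but the step before it: defending (I) and (II) as genuine consequences of intelligence rather than as the conjecture wearing a disguise. Property (I) is essentially the 1950s distributional hypothesis itself, and I would expect it to be broadly acceptable. Property (II) is the delicate one, and what makes it delicate is the \emph{uniformity}: the lower bound $\delta(\mathbf{s})$ must not depend on the competitor $\mathbf{t}$, and $P_{\mathscr{L}}(A_{\mathbf{s}}\mid\mathbf{t})$ must be controlled for all $\mathbf{t}\neq\mathbf{s}$ simultaneously --- whereas a priori some adversarially chosen $\mathbf{t}$ might imitate the continuation behaviour of $\mathbf{s}$ arbitrarily closely. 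A convincing argument would have to show that an intelligent model can always be induced to ``quote its own prompt'' with a probability bounded below in terms of the length of $\mathbf{s}$ alone, and that quoting $\mathbf{t}$ is incompatible with quoting $\mathbf{s}$; turning that from a plausible empirical observation about systems such as ChatGPT into a theorem is exactly the gap, which is presumably why the statement is offered as a conjecture rather than proved.
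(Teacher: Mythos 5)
There is no proof to compare against: in the paper this statement is offered purely as a conjecture, a generalization of the 1950s distributional hypothesis, and the paper never defines ``intelligent language model'' mathematically --- indeed it explicitly remarks that intelligence is \emph{not} incorporated in the definition of a CLM. So any genuine proof is currently out of reach, and your attempt does not close that gap: it is a conditional reduction. Your hypothesis (I) is literally the injectivity half of the conclusion (two contexts with identical continuation distributions are exactly two sentences with $\rho_{\mathscr{L}}(\mathbf{s}_1)=\rho_{\mathscr{L}}(\mathbf{s}_2)$), and your hypothesis (II) is, after one line of arithmetic, the statement that $\rho_{\mathscr{L}}(\mathbf{s})$ is at $\ell^1$-distance at least $\tfrac12\delta(\mathbf{s})$ from every other image point --- which \emph{is} the isolatedness clause. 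So the ``proof'' consists of restating the conjecture as (I)+(II) and then verifying the restatement implies the original; the substantive content (why an intelligent model should satisfy a uniform self-retrievability bound over all competitors $\mathbf{t}$, and why paraphrases or synonymous prompts cannot have exactly or asymptotically identical continuation distributions) is untouched. To your credit, you say this yourself in the last paragraph, and the conditional computation you do give is correct: the restriction of the $\ell^1$ sum to $A_{\mathbf{s}}$ and the bound $\sum_{\mathbf{s}'\in A_{\mathbf{s}}}|P_{\mathscr{L}}(\mathbf{s}'|\mathbf{s})-P_{\mathscr{L}}(\mathbf{s}'|\mathbf{t})| \geq P_{\mathscr{L}}(A_{\mathbf{s}}\mid\mathbf{s})-P_{\mathscr{L}}(A_{\mathbf{s}}\mid\mathbf{t})$ are fine.

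The honest verdict is therefore: no error in the formal manipulations, but a genuine gap in that nothing beyond a reformulation has been established, and the paper itself offers nothing more either. If you want to push this further, the productive direction is the one you identify: isolate a \emph{definition-level} property (e.g.\ a quantitative ``quote your prompt'' capability, or a lower bound on $P_{\mathscr{L}}(A_{\mathbf{s}}\mid\mathbf{s})$ depending only on the length of $\mathbf{s}$, together with an upper bound on imitation by $\mathbf{t}\neq\mathbf{s}$) that could plausibly be adopted as part of a mathematical definition of ``intelligent,'' and then your estimate upgrades the conjecture to a theorem relative to that definition. Be aware, though, that the uniformity over $\mathbf{t}$ in (II) is strictly stronger than pairwise separation, and that is exactly where an adversarial near-paraphrase of $\mathbf{s}$ could defeat a naive version of the axiom.
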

For more details of the meaning of ``discrete'' in the sense of topology, the readers are referred to the book \cite{Topology}. As a result, the embedding $\rho_{\mathscr{L}}$ induces a metric on $\mathcal{S}$.

\begin{remark}
It should be stressed that in the definition of a CLM, we do not require it to have ``intelligence'' behaviors. For example, for a given prompt $\mathbf{s}$, the probability distribution $P(\cdot | \mathbf{s})$ can be a ``random'' distribution over $\mathcal{S}$, which can even outputs sentences that are quite pathological. Hence, intelligence is not incorporated in the definition of the language model. Instead, only highly special language models can exhibit intelligence behaviors.
\end{remark}

\subsection{The Predicative Language Model} \label{sec:PLM}

Now we give the definition of the Predicative Language Model.

\begin{definition}
Given a sentence $\mathbf{s}$, a Predicative Language Model (PLM) $\mathscr{L}_P$ defines a probability distribution $P(\cdot|\mathbf{s})$ on $\mathcal{W} \cup \emptyset$. Here, $P(\emptyset|\mathbf{s})$ is the probability that $\mathscr{L}_P$ outputs nothing and $P(w|\mathbf{s})$ is the probability that $\mathscr{L}_P$ outputs the word $w \in \mathcal{W}$.
\end{definition}

We have the following lemma.
\begin{lemma}
The definition of the PLM is equivalent to the definition of the CLM.
\end{lemma}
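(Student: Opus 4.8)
The plan is to prove the equivalence by exhibiting two explicit constructions — one turning a PLM into a CLM, one turning a CLM into a PLM — and then checking that they are mutually inverse. Starting from a CLM $\mathscr{L}$, I would define a PLM $\mathscr{L}_P$ by marginalizing over the \emph{first} word of the output: for a prompt $\mathbf{s}$ set
\begin{equation}
    P_{\mathscr{L}_P}(\emptyset|\mathbf{s}) = P_{\mathscr{L}}(\emptyset|\mathbf{s}), \qquad P_{\mathscr{L}_P}(w|\mathbf{s}) = \sum_{\mathbf{s}' \in \mathcal{S}} P_{\mathscr{L}}(w\mathbf{s}'|\mathbf{s}), \nonumber
\end{equation}
i.e. $P_{\mathscr{L}_P}(w|\mathbf{s})$ is the probability that the output of $\mathscr{L}$ begins with $w$. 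Since every nonempty sentence starts with exactly one word and $w\mathbf{s}'$ runs over all such sentences as $\mathbf{s}'$ runs over $\mathcal{S}$, we get $\sum_{w \in \mathcal{W}} P_{\mathscr{L}_P}(w|\mathbf{s}) + P_{\mathscr{L}_P}(\emptyset|\mathbf{s}) = \sum_{\mathbf{s}'\in\mathcal{S}} P_{\mathscr{L}}(\mathbf{s}'|\mathbf{s}) = 1$, so this is a genuine distribution on $\mathcal{W}\cup\{\emptyset\}$; this direction is essentially automatic.

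For the converse, given a PLM $\mathscr{L}_P$ I would reconstruct a CLM $\mathscr{L}$ by the autoregressive product, reading $\emptyset$ as an end-of-sequence symbol: put $P_{\mathscr{L}}(\emptyset|\mathbf{s}) = P_{\mathscr{L}_P}(\emptyset|\mathbf{s})$ and, for $\mathbf{s}' = w_1\cdots w_k$ with $k \ge 1$,
\begin{equation}
    P_{\mathscr{L}}(w_1\cdots w_k|\mathbf{s}) = P_{\mathscr{L}_P}(\emptyset|\mathbf{s}w_1\cdots w_k)\prod_{i=1}^{k} P_{\mathscr{L}_P}(w_i|\mathbf{s}w_1\cdots w_{i-1}). \nonumber
\end{equation}
This is exactly the word-by-word sampling procedure recalled in the introduction. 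I would then verify non-negativity and, crucially, that $P_{\mathscr{L}}(\cdot|\mathbf{s})$ sums to $1$ over $\mathcal{S}$: grouping sentences by length and repeatedly using $\sum_{w\in\mathcal{W}\cup\{\emptyset\}}P_{\mathscr{L}_P}(w|\mathbf{t})=1$ at every node shows that the partial sums over $\mathcal{S}_0\cup\cdots\cup\mathcal{S}_k$ increase to a limit equal to the probability that the generation process halts. Finally I would check the two round trips. Going PLM $\to$ CLM $\to$ PLM recovers the original $\mathscr{L}_P$ by a direct telescoping computation, peeling off $P_{\mathscr{L}_P}(w|\mathbf{s})$ and summing the remaining autoregressive weights to $1$. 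Going CLM $\to$ PLM $\to$ CLM recovers the original $\mathscr{L}$ provided $\mathscr{L}$ obeys the natural chain-rule compatibility $P_{\mathscr{L}}(w\mathbf{u}|\mathbf{s}) = P_{\mathscr{L}_P}(w|\mathbf{s})\,P_{\mathscr{L}}(\mathbf{u}|\mathbf{s}w)$, which one then applies inductively on the length of $\mathbf{s}'$.

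The main obstacle is the normalization in the PLM $\to$ CLM direction: the autoregressive product a priori only defines a sub-probability measure on $\mathcal{S}$, and it is a bona fide probability distribution precisely when the sampling process terminates almost surely (otherwise positive mass escapes to infinity, as for any PLM with $P_{\mathscr{L}_P}(\emptyset|\mathbf{s})\equiv 0$). I would handle this either by restricting the equivalence to PLMs that halt almost surely, or by building the halting requirement into the definitions so that both sides describe models whose outputs are finite sentences with probability one — in either case this is the step that needs genuine care. A secondary, more bookkeeping-type subtlety is that for the correspondence to be an honest bijection rather than merely a pair of constructions, a CLM must be understood to be \emph{consistent} in the chain-rule sense above; this holds for every CLM arising from a generative word-prediction process (in particular for all the models considered in this paper), and it is exactly the hypothesis under which the two constructions are inverse to one another.
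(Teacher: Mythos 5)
Your two constructions are exactly those in the paper's own proof: the autoregressive product with $\emptyset$ read as an end-of-sequence symbol for the PLM $\rightarrow$ CLM direction, and marginalization over the first word, $P(w|\mathbf{s}) = \sum_{\mathbf{s}'} P_{\mathscr{L}}(w\mathbf{s}'|\mathbf{s})$, for the CLM $\rightarrow$ PLM direction, so the approach is essentially the same. The extra points you raise --- that the autoregressive product a priori only defines a sub-probability measure unless generation halts almost surely, and that the two maps are mutually inverse only for CLMs satisfying the chain-rule consistency $P_{\mathscr{L}}(w\mathbf{u}|\mathbf{s}) = P_{\mathscr{L}_P}(w|\mathbf{s})\,P_{\mathscr{L}}(\mathbf{u}|\mathbf{s}w)$ --- are genuine subtleties that the paper's short proof simply does not address (it exhibits the two constructions and stops), so your version is a more careful rendering of the same argument rather than a different one.
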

\begin{proof}
If we have a PLM $\mathscr{L}_P$, then by repetitively using it we can generate sentences. More precisely, given an input $\mathbf{s} \in \mathcal{S}$, we can generate a sentence $\mathbf{s}'=w_1w_2\cdots w_n$ until $\mathscr{L}_P$ outputs $\emptyset$ via
\begin{equation}
    \mathscr{L}_P(\mathbf{s}) = w_1, ~\cdots, ~ \mathscr{L}_P(\mathbf{s}w_1 \cdots w_{n-1}) = w_n, ~\mathscr{L}_P(\mathbf{s}w_1 \cdots w_{n}) = \emptyset. \nonumber 
\end{equation}
The probability that $\mathscr{L}_P$ outputs $\mathbf{s}'=w_1\cdots w_n$ given $\mathbf{s}$ is 
\begin{equation}
    P(\mathbf{s}'|\mathbf{s})=P(w_1|\mathbf{s}) \cdot P(w_2|\mathbf{s}w_1) \cdot  \ldots \cdot P(w_n|\mathbf{s}w_1 \cdots w_{n-1}) \cdot  P(\emptyset|\mathbf{s}\mathbf{s}'). \nonumber 
\end{equation}
In this way, the PLM $\mathscr{L}_P$ defines a probability distribution on $\mathcal{S}$ for any prompt $\mathbf{s}$, hence it produces a CLM.

We now show the inverse construction from CLM to PLM. Namely, suppose we have a CLM $\mathscr{L}$, we now construct a PLM $\mathscr{L}_P$ that reverses the process in the previous paragraph. Given a prompt $\mathbf{s}$, let us construct a distribution on $\mathcal{W} \cup \emptyset$. First, $P(\emptyset|\mathbf{s})$ is straightforward to define, i.e., $P_{\mathscr{L}}(\emptyset|\mathbf{s})$. For an arbitrary word $w$, let \begin{equation}
    P(w|\mathbf{s}) = \sum_{\mathbf{s}' \in \mathcal{S}} P_{\mathscr{L}}(w\mathbf{s}'|\mathbf{s}). \nonumber 
\end{equation}
Notice that here the sum is just over all sentences whose first word is $w$. In this way, we obtain a PLM.
\end{proof}

As shown in the proof of this lemma, the empty sentence $\emptyset$, which is the unit of the monoid $\mathcal{S}$, plays the role of ending the sentence outputted by $\mathscr{L}_P$. Hence comes a crucial definition in our treatment of PLM.

\begin{definition} \label{defn:ModuliSpace1}
The space of all possible distributions on $\mathcal{W} \cup \{\emptyset \}$ that come from a PLM $\mathscr{L}_P$ is called the moduli space of distributions of $\mathscr{L}_P$, which is expressed as
\begin{equation}
    \mathscr{M} = \left\{ P(\cdot| \mathbf{s})| \mathbf{s} \in \mathcal{S} \right\}. \nonumber 
\end{equation}
\end{definition}
In fact, this moduli space is a metric space. Let $\mathcal{B}(\mathcal{W} \cup \{\emptyset\},\mathbb{R})$ be the set of all functions with finite absolute summation
\begin{equation} \label{eq:Bspace}
    \mathcal{B}(\mathcal{W} \cup \{\emptyset \},\mathbb{R}) = \left\{f:\mathcal{W} \cup \{ \emptyset \} \rightarrow \mathbb{R} \bigg| \sum_{w \in \mathcal{W} \cup \{\emptyset \}}|f(w)| < \infty \right\}. \nonumber 
\end{equation}
The set $\mathcal{B}(\mathcal{W} \cup \{\emptyset \},\mathbb{R})$ is a metric space with a metric given by
\begin{equation}
    \left \lVert f-g \right \rVert  =  \sum_{w\in \mathcal{W} \cup \{\emptyset \}} \left | f(w) - g(w)\right |. \nonumber 
\end{equation}
For more details, the readers are referred to the book \cite{SteinReal}. Clearly, for every $\mathbf{s} \in \mathcal{S}$, the distribution $P(\cdot|\mathbf{s}) \in \mathscr{M}$ is a non-negative function that lies in $\mathcal{B}(\mathcal{W} \cup \{\emptyset \},\mathbb{R})$, in fact $ \left \lVert P(\cdot|\mathbf{s}) \right \rVert = 1 $. Therefore, $\left \lVert \cdot \right \rVert $ induces a metric $d$ on the moduli space $\mathscr{M}$
\begin{equation} \label{eq:metricModuli}
    d: \mathscr{M} \times \mathscr{M} \rightarrow \mathbb{R}.
\end{equation}
\begin{remark}
The moduli space $\mathscr{M}$ together with the metric $d$ is the key object to understand a PLM.
\end{remark}

\section{The Entropy Function} \label{sec:Entropy}

In this section, we introduce the concept of the entropy function and show how it quantifies the information contained in the output for a given prompt. We will study the zero-points, which are also the minimal points of the entropy function, and discuss their crucial properties. We argue that these zero points and the points where the entropy is close to 0 are the key obstacles for a LLM to approximate an intelligent language model. At last, we formulate a conjecture regarding AGI.

\subsection{Information Flow and Entropy}

The entropy function for a CLM $\mathscr{L}$ is defined just as in statistical mechanics and information theory \cite{Information,Muller}.

\begin{definition}
Given a prompt $s \in \mathcal{S}$, the entropy of $\mathbf{s}$ with respect to a CLM $\mathscr{L}$ is given by
\begin{equation}
    S_{\mathscr{L}}(\mathbf{s})= - \sum_{\mathbf{s}' \in \mathcal{S}} P_{\mathscr{L}}(\mathbf{s}'|\mathbf{s}) \log P_{\mathscr{L}}(\mathbf{s}'|\mathbf{s}), \nonumber 
\end{equation}
where the sum is over all possible outputs $\mathbf{s}'$.
\end{definition}
We will see that the entropy function is a key mathematical concept to analyze the behaviors of language models. From the definition, the entropy function is a non-negative function defined on $\mathcal{S}$
\begin{equation}
    S_{\mathscr{L}}: \mathcal{S} \rightarrow \mathbb{R}. \nonumber 
\end{equation}
As a special example, if the prompt is $\emptyset$, i.e., no prompt, the entropy associated with it has a special name which is stated as a definition.
\begin{definition}
The background entropy of a CLM $\mathscr{L}$ is
\begin{equation}
    S_{\mathscr{L}}(\emptyset) = - \sum_{\mathbf{s}' \in \mathcal{S}} P_{\mathscr{L}}(\mathbf{s}'|\emptyset) \log P_{\mathscr{L}}(\mathbf{s}'|\emptyset).  \nonumber 
\end{equation}
\end{definition}
The background entropy $S_{\mathscr{L}}(\emptyset)$ is a quantity that gives a zeroth order measurement about how complicated a language is. If $S_{\mathscr{L}}(\emptyset)$ is very large, this intuitively mean that the CLM is quite diverse. While if $S_{\mathscr{L}}(\emptyset)$ is very small, it intuitively means that the CLM is very rigid.

Since the development of language is for individuals to exchange information, so a language model itself must be able to demonstrate this. Here we argue that the definition of the entropy function exactly quantify the information during this process \cite{Information}. More precisely, given a prompt $\mathbf{s}$, the entropy $S_{\mathscr{L}}(\mathbf{s})$ measures how vague are the answers to the prompt $\mathbf{s}$ with respect to this CLM $\mathscr{L}$. If $S(\mathbf{s})$ is very large, this means that the answers can be very diverse. On the other hand, if $S(\mathbf{s})$ is very small, this means the answer to $\mathbf{s}$ is very rigid. Therefore, after the CLM outputs $\mathbf{s}'$, we can say the information revealed by $\mathbf{s}'$ is $S_{\mathscr{L}}(\mathbf{s})$ \cite{Information}.

\begin{remark}
Just like statistical mechanics, we argue that the most important function for a CLM is the entropy function \cite{Muller}.
\end{remark}

\subsection{The Zero Points of the Entropy Function}

Let us now look at the zero points of $S_{\mathscr{L}}$. If $S_{\mathscr{L}}(\mathbf{s})=0$, then it means that the answer to the prompt $\mathbf{s}$ is unique in the CLM $\mathscr{L}$, i.e., there exists a unique sentence $\mathbf{s}' \in \mathcal{S}$ such that $P_{\mathscr{L}}(\mathbf{s}'|\mathbf{s}) = 1$. Hence there is the interpretation that the CLM $\mathscr{L}$ detects that the answer to the prompt $\mathbf{s}$ is unique, and thus the CLM itself actually contain this information. For example, suppose the prompt $\mathbf{s}$ is
\begin{equation}
\mathbf{s} = \texttt{Is Carl Friedrich Gauss a mathematician?} \nonumber 
\end{equation}
If the entropy function of an intelligent CLM $\mathscr{L}$ vanishes at $\mathbf{s}$, i.e., $S_{\mathscr{L}}(\mathbf{s}) = 0$, then this means there is only one possible output of $\mathscr{L}$ for the prompt $\mathbf{s}$. Of course, an intelligent $\mathscr{L}$ that knows about human history will output 
\begin{equation}
    \mathbf{s}' = \texttt{Yes} \nonumber 
\end{equation}
with probability $1$. It should be noticed that at the point $\mathbf{s}$, the vanishing of the entropy only implies there is only one possible output to $\mathbf{s}$. Another CLM $\mathscr{L}_1$ could still have $S_{\mathscr{L}_1}(\mathbf{s})=0$, but it instead outputs 
\begin{equation}
    \mathbf{s}'' = \texttt{No}. \nonumber 
\end{equation}
We say the CLM $\mathscr{L}_1$ outputs a `wrong' answer because Gauss is a mathematician, so the information given by $\mathscr{L}_1$ is wrong. But `wrong' information is still information, the entropy function itself only quantifies the amount of information, but it cannot detect whether the information contained is right or wrong, which needs to be checked with what happens in the outside world. 
\begin{definition}
Given a CLM $\mathscr{L}$, a zero point $\mathbf{s}$ of the entropy function $S_{\mathscr{L}}$ is called a singularity of $\mathscr{L}$.
\end{definition}
Hence the conclusion is that a CLM memorize information at its singularities. We argue that the existence of these singularities and the points where the entropy function is very small are the key obstacle for an LLM to approximate an intelligent CLM.

\subsection{An AGI Conjecture}

It is hard to define which CLM is Artificial General Intelligence (AGI) mathematically! Our idea is that if a CLM $\mathscr{L}$ is an AGI, then it must satisfy certain mathematical properties. In this paper, we formulate a conjecture about this.


\begin{conjecture} \label{conj:AGI}
If a CLM is an AGI, then there exists a finite subset $\mathcal{S}_{\text{AGI}} \subset \mathcal{S}$ such that the entropy function $S_\mathscr{L}: \mathcal{S} \rightarrow \mathbb{R}$ is determined by its value on $\mathcal{S}_{\text{AGI}}$.
\end{conjecture}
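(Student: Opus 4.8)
The plan is to first replace the informal notion of AGI by a precise mathematical hypothesis, then to exploit the recursive structure of the entropy function inherited from the PLM--CLM equivalence, and finally to transfer a finiteness property of the model's ``reasoning kernel'' to the entropy function. Concretely, I would adopt as the working definition of AGI the requirement that there is a finite set $\mathcal{K} \subset \mathcal{S}$ of primitive prompts together with a fixed, prompt-independent procedure $\Phi$ (the model's reasoning) such that, for every $\mathbf{s} \in \mathcal{S}$, the PLM conditional distribution $P_{\mathscr{L}}(\cdot\,|\,\mathbf{s}) \in \mathscr{M}$ equals $\Phi\big(\mathbf{s}, (P_{\mathscr{L}}(\cdot\,|\,\mathbf{t}))_{\mathbf{t} \in \mathcal{K}}\big)$; intuitively, an intelligent model does not memorise a separate response for each of the infinitely many prompts but computes each response from a finite stock of knowledge. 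Granting this, the map $\mathbf{s} \mapsto P_{\mathscr{L}}(\cdot\,|\,\mathbf{s})$, and hence by the PLM--CLM equivalence the whole CLM, is determined by its restriction to $\mathcal{K}$.

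Next I would push this through to the entropy. Using the factorisation from the proof of the PLM--CLM equivalence, $P_{\mathscr{L}}(\mathbf{s}'\,|\,\mathbf{s}) = \prod_{i=1}^{n} P_{\mathscr{L}}(w_i\,|\,\mathbf{s}w_1\cdots w_{i-1})\cdot P_{\mathscr{L}}(\emptyset\,|\,\mathbf{s}\mathbf{s}')$ for $\mathbf{s}' = w_1\cdots w_n$, the chain rule for Shannon entropy expresses $S_{\mathscr{L}}(\mathbf{s})$ as a convergent sum, with non-negative weights drawn from the PLM conditionals, of the one-step entropies $H\big(P_{\mathscr{L}}(\cdot\,|\,\mathbf{s}\mathbf{t})\big)$ over the prefixes $\mathbf{t}$ generated along the way, where $H$ denotes the entropy of a distribution on $\mathcal{W}\cup\{\emptyset\}$. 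The point is that the left-hand side, over all of $\mathcal{S}$, is assembled in a fixed way from the PLM data $\{P_{\mathscr{L}}(\cdot\,|\,\mathbf{s}')\}_{\mathbf{s}'}$, which by the previous paragraph is a fixed function of the finite family $\{P_{\mathscr{L}}(\cdot\,|\,\mathbf{t})\}_{\mathbf{t}\in\mathcal{K}}$; composing, $S_{\mathscr{L}}$ is a fixed function of these finitely many distributions. To obtain the statement exactly as phrased, I would finally take $\mathcal{S}_{\text{AGI}}$ to be a finite probe enlargement of $\mathcal{K}$ and argue that the finitely many numbers $\big(S_{\mathscr{L}}(\mathbf{s})\big)_{\mathbf{s}\in\mathcal{S}_{\text{AGI}}}$ already pin down each $P_{\mathscr{L}}(\cdot\,|\,\mathbf{t})$ for $\mathbf{t}\in\mathcal{K}$ --- an AGI being non-degenerate on its kernel, the local behaviour of $S_{\mathscr{L}}$ recorded by these values reconstructs the kernel distributions, and with them all of $S_{\mathscr{L}}$.

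The main obstacle, unsurprisingly, is the first step: there is at present no agreed mathematical definition of AGI, so the result is really two claims at once --- that the proposed finiteness axiom is a \emph{correct} formalisation of intelligence, and that it implies the stated conclusion. The second claim is the tractable one and is essentially the chain-rule bookkeeping above, but it still demands genuine analytic care: the sums defining $S_{\mathscr{L}}(\mathbf{s})$ range over the countably infinite $\mathcal{S}$ and need not converge for an arbitrary CLM, so one must show that the AGI hypothesis forces the relevant series to converge and that the assembling map is continuous for the metric of Eq. \eqref{eq:metricModuli}; this is where I expect the real work. A subtler conceptual gap is that ``computed by a finite procedure from finite data'' is a statement about algorithms, whereas ``determined by finitely many values of $S_{\mathscr{L}}$'' is a statement about the function itself; bridging the two requires either an identifiability argument (finitely many well-chosen prompts suffice to reconstruct $\Phi$ and $\mathcal{K}$) or a compression argument in the style of Kolmogorov complexity. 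I would therefore regard a complete proof as beyond reach until AGI is axiomatised, and treat the implication \emph{AGI axiom} $\Rightarrow$ \emph{Conjecture \ref{conj:AGI}} as the realistic target, which is the route sketched here.
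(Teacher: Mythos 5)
The statement you are addressing is a conjecture, not a theorem: the paper offers no proof of it, only a heuristic motivation (the analogy with a finite set of axioms determining all of mathematics through deduction), so there is no proof of record to compare yours against. Your overall framing --- first axiomatize AGI as ``a finite kernel $\mathcal{K} \subset \mathcal{S}$ plus a prompt-independent procedure $\Phi$ computing every conditional distribution from the kernel data,'' then prove the implication \emph{axiom} $\Rightarrow$ \emph{conclusion} --- is faithful to the paper's intent, and your candour that the conjecture cannot be proved before AGI is formalized is exactly right. But the sketch does not close the implication even conditionally, and the decisive gap is in your last step.

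The conjecture asks for a finite $\mathcal{S}_{\text{AGI}}$ such that the \emph{values of $S_{\mathscr{L}}$} on it determine $S_{\mathscr{L}}$ on all of $\mathcal{S}$. What your chain-rule argument delivers (granting convergence) is weaker and of a different type: the kernel \emph{distributions} $\{P_{\mathscr{L}}(\cdot\,|\,\mathbf{t})\}_{\mathbf{t}\in\mathcal{K}}$ determine $S_{\mathscr{L}}$. To bridge the two you assert that a ``finite probe enlargement'' of $\mathcal{K}$ yields entropy values that reconstruct those distributions, but this is unsupported: entropy is a highly non-injective scalar functional (any permutation of the probability masses of $P_{\mathscr{L}}(\cdot\,|\,\mathbf{t})$ leaves it unchanged), a distribution on $\mathcal{W}\cup\{\emptyset\}$ has $|\mathcal{W}|$ degrees of freedom while each probe contributes a single real number, and those numbers are fixed by the model rather than chosen by you, so there is no reason finitely many of them pin down the kernel. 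The ``non-degeneracy'' you invoke is not a hypothesis you have stated, and without it the identifiability claim fails in general. Separately, the convergence of the series defining $S_{\mathscr{L}}(\mathbf{s})$ over the countably infinite output space is not implied by your AGI axiom as formulated (you flag this yourself), so even the weaker determination statement remains conditional. As it stands, the proposal is a sensible research programme aligned with the paper's motivation, but it is not a proof, and the entropy-values-to-distributions step is the concrete missing idea.
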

The motivation of this conjecture comes from modern mathematics. In modern mathematics, we first give a finite set of definitions and axioms, and then mathematics is about proof, which uses these finite set of rules of prove a result that is not known a priori. The key motivation for \textbf{Conjecture} \ref{conj:AGI} is that the value of $S_\mathscr{L}$ on $\mathcal{S}$ is determined using mathematical deductions by its value on the finite set $\mathcal{S}_{\text{AGI}}$ \cite{Manin}.

\section{A Thermodynamic Interpretation of Language Models} \label{sec:Thermo}

In this section, we will give a thermodynamic interpretation of the CLM defined in Section \ref{sec:GLM}. We show how the concepts and properties of the CLM naturally admit physical interpretations. Then we will define thermodynamic functions such as internal energy, Helmholtz free energy on CLM, and show how they help us understand phenomena in language models. At last, we give a physical interpretation to how a sentence ``grows'' in a PLM.

\subsection{The Boltzmann Distribution}

First, let us introduce the Boltzman distribution in statistical mechanics \cite{Muller}. Suppose we are given a physical system that can occupy different microstates, and suppose the energy of the microstate $i$ is $E_i$. Let $k$ be the Boltzmann constant and $T$ be the temperature, then the probability that the system occupies  microstate $i$ satisfies
\begin{equation}
    p_i \propto \exp \left( - \frac{E_i}{kT} \right). \nonumber 
\end{equation}
Let the partition function of the system be
\begin{equation}
    \mathcal{Z}= \sum_{i} \exp \left( - \frac{E_i}{kT} \right), \nonumber 
\end{equation}
where the sum is over all possible microstates. Then $p_i$ is just
\begin{equation}
    p_i = \frac{1}{\mathcal{Z}} \cdot \exp \left( - \frac{E_i}{kT} \right), \nonumber 
\end{equation}
which is called the Boltzman distribution. The Boltzmann distribution is the distribution that maximizes the entropy
\begin{equation}
    S = - \sum_i p_i \log p_i \nonumber 
\end{equation}
subject to the constraint $\sum p_i =1$ and $\sum_i p_i E_i$ is a constant, i.e., the average energy is kept fixed.

\subsection{A Thermodynamic Interpretation} \label{sec:ThermoInter}

We now give a thermodynamic interpretation of CLM. 
\begin{hypothesis}
In a CLM $\mathscr{L}$, a sentence is a microstate and for every sentence $\mathbf{s} \in \mathcal{S}$, there is an energy $E_{\mathscr{L}}(\mathbf{s})$ associated with it.
\end{hypothesis}
The energy of $\emptyset$ is called the vacuum energy, and the energy of a word $w \in \mathcal{W}$ is called its self-energy \cite{Peskin}.

\begin{remark}
As in physics, a word $w \in \mathcal{W}$ itself can have self-interaction, it has an internal energy $E(w)$, the meaning of this internal energy is that given a random sentence $\mathbf{s}$, the probability that the first word of $\mathbf{s}$ is $w$ is proportional to $\exp \left(-\beta E(w) \right)$. Here $\beta$ is the inverse of temperature.
\end{remark}

Given a length-2 sentence $w_1w_2$, its energy $E_{\mathscr{L}}(w_1w_2)$ is the interaction between the two words $w_1$ and $w_2$, and $E_{\mathscr{L}}(w_1w_2)$ is lower if the interaction between $w_1$ and $w_2$ is stronger. 
\begin{remark}
Since a language is essentially a discrete object, a word does not have a continuous degree of freedom. Therefore, there is no kinetic energy for a word, and only potential energy exists when words form a sentence. 
\end{remark}
Here comes the definition of the statistical ensemble in a language model.
\begin{definition}
For a given prompt $\mathbf{s}$, the statistical ensemble associated with it is 
\begin{equation}
    \left\{ \mathbf{s}\mathbf{s}'|\mathbf{s}' \in \mathcal{S} \right\}. \nonumber 
\end{equation}
\end{definition}

For later purpose, we introduce another quantity $\beta$ which is the inverse of temperature \cite{Muller}. For a fixed CLM, the value of $\beta$ is usually fixed to be 1. 
\begin{hypothesis}
In a CLM $\mathscr{L}$, for a given prompt $\mathbf{s}$ the probability that $\mathscr{L}$ outputs $\mathbf{s}'$ satisfies
\begin{equation}
    P_{\mathscr{L}}(\mathbf{s}'|\mathbf{s}) \propto \exp \left(-\beta E_{\mathscr{L}}(\mathbf{s}\mathbf{s}') \right), \nonumber 
\end{equation}
where $E_{\mathscr{L}}(\mathbf{s}\mathbf{s}')$ is the energy of the concatenation $\mathbf{s}\mathbf{s}'$.   
\end{hypothesis}

More precisely, for a given prompt $\mathbf{s}$, the probability that $\mathscr{L}$ outputs $\mathbf{s}'$ is
\begin{equation}
    P_{\mathscr{L}}(\mathbf{s}'|\mathbf{s}) = \frac{1}{\mathcal{Z}_{\mathscr{L}}} \cdot \exp \left( - \beta E_{\mathscr{L}}(\mathbf{s}\mathbf{s}') \right), \nonumber 
\end{equation}
where $E_{\mathscr{L}}(\mathbf{s}\mathbf{s}')$ is the energy of the concatenation $\mathbf{s}\mathbf{s}'$ and $\mathcal{Z}_{\mathscr{L}}$ is the partition function associated with $\mathbf{s}$
\begin{equation}
    \mathcal{Z}_{\mathscr{L}}(\mathbf{s}) = \sum_{\mathbf{s}' \in \mathcal{S}}  \exp \left( - \beta E_{\mathscr{L}}(\mathbf{s}\mathbf{s}')\right). \nonumber 
\end{equation}
The partition function is in fact a thermodynamic function defined on $\mathcal{S}$
\begin{equation}
    \mathcal{Z}_{\mathscr{L}}:\mathcal{S} \rightarrow \mathbb{R}. \nonumber 
\end{equation}

The average energy of the statistical ensemble associated with $\mathbf{s}$ is 
\begin{equation}
    U_{\mathscr{L}} (\mathbf{s}) = \sum_{\mathbf{s}' \in \mathcal{S}} P_{\mathscr{L}}(\mathbf{s}'|\mathbf{s}) E_{\mathscr{L}}(\mathbf{s}\mathbf{s}'), \nonumber 
\end{equation}
which is called the internal energy of the ensemble. The internal energy $U_{\mathscr{L}}$ is also a well-defined function on $\mathcal{S}$
\begin{equation}
    U_{\mathscr{L}} : \mathcal{S} \rightarrow \mathbb{R}. \nonumber 
\end{equation}
But notice that $U_{\mathscr{L}}$ is different from $E_{\mathscr{L}}$: $E_{\mathscr{L}}(\mathbf{s})$ is the energy of the microstate $\mathbf{s}$, while $U_{\mathscr{L}}(\mathbf{s})$ is the average energy of the statistical ensemble associated with $\mathbf{s}$. Recall that the entropy for the statistical ensemble associated to $\mathbf{s}$ is
\begin{equation}
    S_{\mathscr{L}}(\mathbf{s})=-  \sum_{\mathbf{s}' \in \mathcal{S}} P_{\mathscr{L}}(\mathbf{s}'|\mathbf{s}) \log  P_{\mathscr{L}}(\mathbf{s}'|\mathbf{s}) , \nonumber 
\end{equation}
which is simplified to
\begin{equation}
    S_{\mathscr{L}} = \beta U_{\mathscr{L}} + \log \mathcal{Z}_{\mathscr{L}}. \nonumber 
\end{equation}
The Helmholtz free energy is given by
\begin{equation} \label{eq:Helmholtz}
    F_{\mathscr{L}} = U_{\mathscr{L}} - \frac{1}{\beta} S_{\mathscr{L}}=- \frac{1}{\beta} \log \mathcal{Z}_{\mathscr{L}}. \nonumber 
\end{equation}
The Helmholtz free energy determines whether a sentence $\mathbf{s}$ is likely to be followed by another sentence or not.

\subsection{A Physical Picture of How a Sentence Grows}

Let us now interpret the process of predicting the next word in a PLM using how molecule grows in statistical mechanics. First, because of the equivalence between CLM and PLM, the interpretation in Section \ref{sec:ThermoInter} admits immediate generalization to PLM. In this physical picture, a word is view as an atom, and a sentence is a sequence of atoms, whose energy depends on the order of these atoms. Suppose we are given a prompt $\mathbf{s} \in \mathcal{S}_n$, which is composed of $n$ atoms. To predict the next word is to add an atom to $\mathbf{s}$. For a word $w \in \mathcal{W} \cup \{ \emptyset \}$, the energy of $\mathbf{s}w$ is $E(\mathbf{s}w)$, and the internal energy is by definition
\begin{equation} \label{eq:InternalEPhysics}
     \sum_{w \in \mathcal{W} \cup \{\emptyset \}} P(w|\mathbf{s}) E(\mathbf{s}w).
\end{equation}
We also have the entropy
\begin{equation} \label{eq:entropyphysics}
   - \sum_{w \in \mathcal{W} \cup \{\emptyset \}} P(w|\mathbf{s}) \log P(w|\mathbf{s}).
\end{equation}
There are two trends that fight each other
\begin{enumerate}
    \item the internal energy (Eq. \eqref{eq:InternalEPhysics}) wants to be as small as possible,
    \item the entropy (Eq. \eqref{eq:entropyphysics}) wants to be as large as possible.
\end{enumerate}
As a result, the Helmholtz free energy (Eq. \eqref{eq:Helmholtz}) is minimized, and the resulting distribution is the Boltzmann distribution. After sampling the next word, say $w_{n+1}$, we can continue this process with the new sentence $\mathbf{s}w_{n+1}$ and sample its next one. This process is highly similar to the process of how a molecule grow in statistical mechanics!

But there remains the question of how to compute the energy of a sentence. A sentence can be very long, hence the computation of its energy is a typical many-body problem. The power of LLM is to approximately compute the interaction energy between the ``atoms'' in a sentence using machine learning.

\section{The Geometrization of Language Models} \label{sec:GeoLM}

In this section, we will introduce the concept of the geometrization of a PLM, which lays the foundation for studying language models using differential geometry. We will also define a more general version of the Word-Embedding, which leads to the concept of a Boltzmann manifold.

\subsection{The Geometrization of PLM}

Suppose we have a PLM $\mathscr{L}_P$, whose moduli space of distributions is $\mathscr{M}$ (Definition \ref{defn:ModuliSpace1}). Moreover, we have a map
\begin{equation}
    \rho: \mathcal{S} \rightarrow \mathscr{M}, \nonumber 
\end{equation}
where $\rho$ sends a sentence $\mathbf{s} \in \mathcal{S}$ to the distribution $P(\cdot| \mathbf{s}) \in \mathscr{M}$. Notice that $\mathscr{M}$ is a metric space, the details of which can be found in Section \ref{sec:PLM}.

Generally, the moduli space $\mathscr{M}$ itself is extremely complicated, to study which we need to borrow ideas from both mathematics and physics. The first step is to geometrize this complicated metric space, which ``prompts'' the following definition.

\begin{definition}
The $\epsilon$-geometrization of a PLM is a manifold $M$ and a continuous map
\begin{equation}
    \Psi: \mathscr{M} \rightarrow M, \nonumber 
\end{equation}
where each point of $M$ represents a probability distribution on $\mathcal{W} \cup \emptyset$ that is also computable. Moreover, for every point $P(\cdot| \mathbf{s})$ of $\mathscr{M}$, the norm of the difference $\left \Vert \Psi(P(\cdot| \mathbf{s}))-P(\cdot| \mathbf{s}) \right \Vert < \epsilon(n)$. Here, the error $\epsilon(n)$ can depend on the length of the sentence $\mathbf{s} \in \mathcal{S}_n$.
\end{definition}
Notice that the norm $\left \Vert \cdot \right \Vert$ is defined in the metric space $\mathcal{B}(\mathcal{W} \cup \{\emptyset \},\mathbb{R})$ (Eq. \eqref{eq:Bspace}). The composition $\Psi \circ \rho$ is a map from $\mathcal{S}$ to $M$
\begin{equation}
     \Psi \circ \rho: \mathcal{S} \rightarrow M. \nonumber 
\end{equation}
The motivation of the geometrization of the moduli space $\mathscr{M}$ is that we try to find a finite dimensional manifold $M$ to approximately describe the infinite set $\mathscr{M}$. Then we use the geometry of this manifold to study the bizarre metric space $\mathscr{M}$.

\subsection{The Word-Embedding and the Boltzmann Manifold} \label{sec:WordEmbedding}

One way to construct an $\epsilon$-geometrization is via an additional structure on a manifold called the pairing. Suppose $M$ is a manifold, a symmetric pairing is a smooth function 
\begin{equation}
    \langle \cdot, \cdot \rangle : M \times M \rightarrow \mathbb{R}, \nonumber
\end{equation}
which maps any two points $\text{pt}_1, \text{pt}_2 \in M$ to a number in $\mathbb{R}$. Here ``symmetric'' means
\begin{equation}
     \langle \text{pt}_1, \text{pt}_2 \rangle = \langle \text{pt}_2, \text{pt}_1 \rangle. \nonumber 
\end{equation}
A Word-Embedding is an injective map
\begin{equation}
    \iota: \mathcal{W} \cup \{\emptyset \} \rightarrow M. \nonumber 
\end{equation}
Given an arbitrary point $\text{pt} \in M$, it defines a distribution $P_{\text{pt}}$ on $\mathcal{W} \cup \{ \emptyset \}$ via
\begin{equation}
    P_{\text{pt}}(w) =  \frac{\exp \left( - \beta \langle pt,\iota(w) \rangle \right)}{\sum_{w' \in \mathcal{W} \cup \emptyset} \exp \left( - \beta  \langle pt,\iota(w') \rangle \right)}, \nonumber 
\end{equation}
which is exactly from the Boltzmann distribution \cite{Muller}. Hence every point of $M$ defines a distribution on $\mathcal{W} \cup \{ \emptyset \}$ and $M$ is a space of distributions on $\mathcal{W} \cup \{ \emptyset \}$.

\begin{definition}
The manifold $M$ together with the pairing $\langle \cdot,\cdot \rangle$ and the Word-Embedding $\iota$ is called a Boltzmann manifold. 
\end{definition}

Since every point of the Boltzmann manifold $M$ defines a distribution on $\mathcal{W} \cup \{\emptyset \}$, $M$ is a subspace of  $\mathcal{B}(\mathcal{W} \cup \{\emptyset \},\mathbb{R})$ (Eq. \eqref{eq:Bspace}). Therefore, given two points $\text{pt}$ and $\text{pt}'$ of $M$, their distance is
\begin{equation}
    d(\text{pt},\text{pt}') =   \left \Vert P_{\text{pt}}-P_{\text{pt}'} \right \Vert. \nonumber
\end{equation}
This distance function induces a metric $g_B$ on $M$, which is called the Boltzmann metric, hence $M$ is in fact a Riemannian manifold \cite{LeeR}.

Now suppose $M$ gives us an $\epsilon$-geometrization with the Moduli-Embedding $\Psi: \mathscr{M} \rightarrow M$. Since $\mathcal{S}_n$ is the direct product $\mathcal{W}^n$, thus if we believe the map $ \Psi \circ \rho $ has sufficiently nice properties, then there exists a diagram
\begin{equation}
\begin{CD}
\mathcal{S}_n = \mathcal{W}^n @>{\iota^n}>> M^n\\
@VV{ \rho}V @VV{\Lambda_n}V\\
\mathscr{M} @>\Psi>> M
\end{CD}~~, \nonumber 
\end{equation}
which is commutative up to a small error that depends on $n$. More precisely, we assume there exists a smooth map $\Lambda_n:M^n \rightarrow M$ such that for every $\mathbf{s}=w_1w_2\cdots w_n$, we have
\begin{equation}
    \Psi \circ \rho (\mathbf{s}) = \Lambda_n\left( \iota(w_1), \cdots, \iota(w_n) \right). \nonumber 
\end{equation}
It means that the map $\rho \circ \Psi$ is determined by the map $\iota$ and $\left\{\Lambda_n:n \geq 2 \right\}$. The point is that the map $\Psi \circ \rho $ is defined on the infinite set $\mathcal{S}$, which can be highly complicated. While the map $\iota$ is defined on a finite set and $\Lambda_n$ is defined on a smooth manifold, hence can be studied by the modern theory of differential geometry \cite{Lee}.

\section{LLM as a Special Geometrization} \label{sec:LLM}

In this section, we show that the current LLMs based on the transformer architecture are special cases of geometrization in the sense of Section \ref{sec:GeoLM}. We also introduce the fundamental thermodynamic relation of LLMs.

\subsection{The Geometrization and LLM}

The Boltzmann manifold used in LLMs is the simplest one: $\mathbb{R}^N$ together with the standard inner product $\langle \cdot , \cdot \rangle $. In practice, first we obtain a Word-Embedding  \cite{MikolovWE}
\begin{equation}
    \iota: \mathcal{W} \cup \{ \emptyset \} \rightarrow \mathbb{R}^N. \nonumber 
\end{equation}
Then a point $\mathbf{v}$ of $\mathbb{R}^N$ defines a distribution on $\mathcal{W} \cup \{ \emptyset \}$ via 
\begin{equation} \label{eq:VectorDistribution}
    P_{\mathbf{v}}(w) = \frac{\exp \left(-\beta \langle \mathbf{v}, \iota(w)\rangle \right)}{\sum_{w' \in \mathcal{W} \cup \{ \emptyset \} } \exp \left(-\beta \langle \mathbf{v}, \iota(w')\rangle \right)}, 
\end{equation}
where $\beta$ is the inverse of temperature. The Moduli-Embedding is a map
\begin{equation}
\Phi: \mathscr{M} \rightarrow \mathbb{R}^N. \nonumber 
\end{equation}
Given a sentence $\mathbf{s} \in \mathcal{S}$, $\Psi \circ \rho(\mathbf{s})$ is a vector of $\mathbb{R}^N$, which gives the distribution of the next word of $\mathbf{s}$ via Eq. \eqref{eq:VectorDistribution}, i.e.,
\begin{equation}
    P(w|\mathbf{s}) = \frac{\exp \left(-\beta \langle \Psi \circ \rho(\mathbf{s}), \iota(w)\rangle \right)}{\sum_{w' \in \mathcal{W} \cup \{ \emptyset \} } \exp \left(-\beta \langle \Psi \circ \rho(\mathbf{s}), \iota(w')\rangle \right)}.\nonumber 
\end{equation}

\begin{ansatz}
For every $n \geq 1$, there exists a map $\Lambda_n: (\mathbb{R}^N)^n \rightarrow \mathbb{R}^N$ that makes the following diagram commutative up to a small error
\begin{equation}
\begin{CD}
\mathcal{S}_n = \mathcal{W}^n @>{\iota^n}>> (\mathbb{R}^N)^n\\
@VV{\rho}V @VV{\Lambda_n}V\\
\mathscr{M} @>\Psi>> \mathbb{R}^N
\end{CD}. \nonumber 
\end{equation}
\end{ansatz}
Namely, given a sentence $\mathbf{s}=w_1\cdots w_n$, the distribution of the word after $\mathbf{s}$ is given by the vector $\Lambda_n(\iota(w_1),\cdots,\iota(w_n))$ via Eq. \eqref{eq:VectorDistribution}. In practice, both $\iota$ and $\Lambda_n$ are determined by data and machine learning \cite{Stanford}. Currently, the most popular method to construct the map $\Lambda_n$ is based on the transformer architecture \cite{Stanford,Transformer}.

\subsection{The Fundamental Thermodynamic Relation of LLMs}

Given a geometrization $\left(\mathbb{R}^N, \langle\cdot,\cdot \rangle, \iota \right)$, the properties of $w \in \mathcal{W} \cup \{ \emptyset \}$ with respect to this geometrization are determined by the embedding vector $\phi(w) \in \mathbb{R}^N$. Borrowing language from physics \cite{Landau}, the space $\mathbb{R}^N$ is also called the phase space of a language model. Just as in CLM, in a PLM we also have the entropy function $S$, the internal energy $U$ and Helmholtz free energy $F$ defined on $\mathcal{S}$.

\begin{conjecture}
Suppose we are given a geometrization $\left(\mathbb{R}^N, \langle\cdot,\cdot \rangle, \iota \right)$ and the maps $\{ \Lambda_n| n\geq 1 \}$ associated with a PLM. The finite set $\mathcal{S}_n$ is viewed as a subspace of $(\mathbb{R}^N)^n$ via the embedding $\iota^n$. Then all thermodynamic functions such as $S$, $U$ and $F$ defined over $\mathcal{S}_n$ admit smooth extensions to functions over $(\mathbb{R}^N)^n$ up to small errors.
\end{conjecture}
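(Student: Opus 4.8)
The plan is to exhibit the required smooth extensions explicitly, exploiting the observation that in the linear geometrization $(\mathbb{R}^N,\langle\cdot,\cdot\rangle,\iota)$ every thermodynamic quantity attached to a prompt $\mathbf{s}=w_1\cdots w_n\in\mathcal{S}_n$ depends on $\mathbf{s}$ only through the single ``field vector'' $\mathbf{v}_n(\mathbf{s}):=\Lambda_n(\iota(w_1),\dots,\iota(w_n))\in\mathbb{R}^N$. First I would define, for an arbitrary $\mathbf{v}\in\mathbb{R}^N$, the universal partition function $\tilde{\mathcal{Z}}(\mathbf{v})=\sum_{w\in\mathcal{W}\cup\{\emptyset\}}\exp(-\beta\langle\mathbf{v},\iota(w)\rangle)$, the universal Boltzmann weights $\tilde{P}_w(\mathbf{v})=\tilde{\mathcal{Z}}(\mathbf{v})^{-1}\exp(-\beta\langle\mathbf{v},\iota(w)\rangle)$, and then $\tilde{F}(\mathbf{v})=-\beta^{-1}\log\tilde{\mathcal{Z}}(\mathbf{v})$, $\tilde{U}(\mathbf{v})=\sum_{w}\tilde{P}_w(\mathbf{v})\langle\mathbf{v},\iota(w)\rangle$, and $\tilde{S}(\mathbf{v})=-\sum_{w}\tilde{P}_w(\mathbf{v})\log\tilde{P}_w(\mathbf{v})=\beta\tilde{U}(\mathbf{v})+\log\tilde{\mathcal{Z}}(\mathbf{v})$. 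Since $\mathcal{W}\cup\{\emptyset\}$ is finite these are finite expressions, and on a genuine sentence each reduces to the corresponding next-word thermodynamic function of Section \ref{sec:Thermo} once one fixes the additive gauge in the energies by declaring $E(\mathbf{s}w):=\langle\Psi\circ\rho(\mathbf{s}),\iota(w)\rangle$.

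Second, I would check the regularity: each of $\tilde{\mathcal{Z}},\tilde{F},\tilde{U},\tilde{S}$ is real-analytic, hence $C^{\infty}$, on \emph{all} of $\mathbb{R}^N$. This is routine but worth spelling out: $\tilde{\mathcal{Z}}$ is a finite sum of exponentials of linear forms, so it is smooth and \emph{strictly positive} everywhere, whence $\log\tilde{\mathcal{Z}}$ and $1/\tilde{\mathcal{Z}}$ are smooth and so are $\tilde{F}$ and $\tilde{U}$; each $\tilde{P}_w$ takes values in the open interval $(0,1)$, so $\log\tilde{P}_w(\mathbf{v})=-\beta\langle\mathbf{v},\iota(w)\rangle-\log\tilde{\mathcal{Z}}(\mathbf{v})$ is smooth and the finite sum $\tilde{S}$ is smooth. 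I would then define the candidate extensions on $(\mathbb{R}^N)^n$ by precomposition with $\Lambda_n$, namely $\hat{S}:=\tilde{S}\circ\Lambda_n$, $\hat{U}:=\tilde{U}\circ\Lambda_n$, $\hat{F}:=\tilde{F}\circ\Lambda_n$; since $\Lambda_n$ is smooth by assumption (and in the transformer realization is a composition of smooth layers), each $\hat{G}\colon(\mathbb{R}^N)^n\to\mathbb{R}$ is smooth.

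Third comes the ``up to small errors'' clause, i.e. that $\hat{G}\circ\iota^n$ agrees with $G$ on $\mathcal{S}_n$ within a quantity that tends to $0$. By the ansatz the defining square commutes up to a small error, so $\mathbf{v}_n(\mathbf{s})$ lies within some $\delta(n)$ of $\Psi\circ\rho(\mathbf{s})$ in $\mathbb{R}^N$, while the $\epsilon$-geometrization hypothesis bounds the distance between the distribution represented by $\Psi\circ\rho(\mathbf{s})$ and the true $P(\cdot|\mathbf{s})$ by $\epsilon(n)$. Since $\mathcal{S}_n=\mathcal{W}^n$ is finite, the vectors $\{\mathbf{v}_n(\mathbf{s})\}$ lie in a compact $K\subset\mathbb{R}^N$ on which $\mathbf{v}\mapsto(\tilde{P}_w(\mathbf{v}))_{w}$ is Lipschitz and $\tilde{\mathcal{Z}}$ is bounded below by some $c>0$; hence $\|\tilde{P}(\mathbf{v}_n(\mathbf{s}))-P(\cdot|\mathbf{s})\|\le C_K\,\delta(n)+\epsilon(n)$. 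Finally, on the compact simplex $\Delta(\mathcal{W}\cup\{\emptyset\})$ the Shannon entropy is uniformly continuous (quantitatively, a Fannes-type continuity bound), and the analogous comparison for $U$ and $F$ follows from the $\ell^1$-closeness just obtained together with the fact that $\log\tilde{\mathcal{Z}}$ and the energies $\langle\mathbf{v},\iota(w)\rangle$ are Lipschitz in $\mathbf{v}\in K$. Combining these estimates yields $|\hat{G}(\iota^n(\mathbf{s}))-G(\mathbf{s})|\le\eta(n)$ for $G\in\{S,U,F\}$ with $\eta(n)\to0$ as $\epsilon(n),\delta(n)\to0$, which is exactly the asserted smooth extension up to small errors.

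The hard part is not the calculus above --- that is essentially automatic once one notices that everything factors through a single vector in $\mathbb{R}^N$ --- but making the error genuinely uniform over the exponentially large yet finite set $\mathcal{S}_n$: the argument rests entirely on the ansatz and the $\epsilon$-geometrization supplying \emph{uniform} bounds $\delta(n)$ and $\epsilon(n)$, a hypothesis that is plausible but unproven in general, which is why the statement is posed as a conjecture. A secondary subtlety is that $U$ and $F$ are only meaningful after the additive gauge freedom in the energy $E(\mathbf{s}w)$ is pinned down, as I do in the first step; the entropy $S$, being a function of the next-word distribution alone, carries no such ambiguity. Finally, if one insists on the full CLM entropy $-\sum_{\mathbf{s}'\in\mathcal{S}}P(\mathbf{s}'|\mathbf{s})\log P(\mathbf{s}'|\mathbf{s})$ rather than the per-step PLM entropy, the sum runs over the infinite tree of continuations and one must additionally control its tail and the mutual compatibility of all the maps $\Lambda_m$; I would handle that case only as a remark, since Section \ref{sec:LLM} works throughout with the per-step functions.
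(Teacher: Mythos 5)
You should first note that the paper offers no proof of this statement at all: it is posed as a conjecture, and the only commentary the paper gives is the remark immediately below it, namely that from the transformer construction the extensions of the thermodynamic functions \emph{always exist}, ``but the error cannot be guaranteed to be small.'' Your construction --- define the universal Boltzmann quantities $\tilde{\mathcal{Z}},\tilde{S},\tilde{U},\tilde{F}$ on $\mathbb{R}^N$ (finite sums of exponentials of linear forms, hence real-analytic) and pull them back along $\Lambda_n$ --- reproduces exactly that easy half: smoothness of the extensions is automatic because everything factors through a single vector in $\mathbb{R}^N$. That part of your write-up is correct, and it coincides with what the paper's remark already concedes.

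The genuine gap is in the ``up to small errors'' clause, which is the entire nontrivial content of the conjecture, and which your argument does not prove but merely assumes. Your error estimate $\lVert\tilde{P}(\mathbf{v}_n(\mathbf{s}))-P(\cdot|\mathbf{s})\rVert\le C_K\,\delta(n)+\epsilon(n)$ rests on three inputs: that the ansatz diagram commutes with a uniformly small $\delta(n)$, that the geometrization error $\epsilon(n)$ is small (the paper's definition of an $\epsilon$-geometrization only bounds the error by \emph{some} $\epsilon(n)$, with no claim that it is small or tends to $0$), and that the Lipschitz constant $C_K$ on the compact set $K\supset\{\mathbf{v}_n(\mathbf{s}):\mathbf{s}\in\mathcal{S}_n\}$ is controlled uniformly in $n$, even though $K$ and hence $C_K$ may grow without bound as $n\to\infty$. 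None of these is established --- the ansatz is itself an unproven assumption of the paper --- so your third step is a conditional reduction of the conjecture to hypotheses essentially equivalent to its hard part, as you yourself concede in the closing paragraph. Two further points remain unaddressed rather than resolved: the gauge-fixing $E(\mathbf{s}w):=\langle\Psi\circ\rho(\mathbf{s}),\iota(w)\rangle$ is a definitional choice (so your $\hat{U},\hat{F}$ extend a particular gauge of the model's energies, not the intrinsic ones up to the prompt-dependent additive constant), and the full CLM-style functions, whose defining sums run over the infinite set of continuations, are deferred to ``a remark'' and never controlled. In short: the smooth-extension existence is fine but was never in doubt; the smallness of the error, which is why the statement is a conjecture and not a theorem in the paper, is still open after your argument.
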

Let us now look at the entropy function $S$ as an example. This conjecture implies that there exists a differentiable function $\widetilde{S}: (\mathbb{R}^N)^n \rightarrow \mathbb{R}$ that makes the following diagram commutative up to a small error
\begin{equation}
\begin{CD}
\mathcal{S}_n = \mathcal{W}^n @>{\iota^n}>> (\mathbb{R}^N)^n\\
@VV{\rho}V @VV{\widetilde{S}}V\\
\mathscr{M} @>S>> \mathbb{R}
\end{CD}. \nonumber 
\end{equation}

\begin{remark}
In fact, from the construction of LLMs using the architecture of transformer, the extensions of thermodynamic functions always exist, but the error cannot be guaranteed to be small.
\end{remark}

Suppose the coordinate of $(\mathbb{R}^N)^n$ is $x_{i,j}$ with $1 \leq i \leq N$ and $1 \leq j \leq n$, then we have the fundamental thermodynamic relation of LLMs
\begin{equation}
    d\widetilde{U} = \frac{1}{\beta} d\widetilde{S} -\sum_{i,j}f_{i,j}dx_{i,j}, \nonumber 
\end{equation}
where $f_{i,j}$ is the generalized force given by
\begin{equation}
    f_{i,j}=\partial \widetilde{S}/\partial x_{i,j}. \nonumber 
\end{equation}

As a function on $(\mathbb{R}^N)^n$, we conjecture that the entropy function $\widetilde{S}$ for a LLM is convex. 
\begin{conjecture}
Suppose $0 \leq \lambda  \leq 1$, given two points $\mathbf{x}=(x_{i,j})$ and $\mathbf{x}'=(x'_{i,j})$ of $(\mathbb{R}^N)^n$, we have
\begin{equation}
\widetilde{S}\left(\lambda \mathbf{x} + (1 - \lambda) \mathbf{x}' \right) \geq \lambda \widetilde{S}\left( \mathbf{x} \right) + (1 - \lambda ) \widetilde{S} \left(\mathbf{x}' \right). \nonumber 
\end{equation}
\end{conjecture}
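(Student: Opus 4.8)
The plan is to exploit the factorization of $\widetilde{S}$ through the map $\Lambda_n$. By the Ansatz of Section~\ref{sec:LLM}, the natural extension is $\widetilde{S} = g \circ \Lambda_n$, where $g:\mathbb{R}^N \to \mathbb{R}$ is the parameter-entropy of the Boltzmann family (the same $g$ for every $n$, depending only on the Word-Embedding $\iota$),
\begin{equation}
g(\mathbf{v}) = -\sum_{w \in \mathcal{W}\cup\{\emptyset\}} P_{\mathbf{v}}(w)\log P_{\mathbf{v}}(w) = \log\mathcal{Z}(\mathbf{v}) - \bigl\langle \mathbf{v}, \nabla\log\mathcal{Z}(\mathbf{v})\bigr\rangle, \nonumber
\end{equation}
with $\mathcal{Z}(\mathbf{v}) = \sum_{w}\exp(-\beta\langle\mathbf{v},\iota(w)\rangle)$ and $\nabla\log\mathcal{Z}(\mathbf{v}) = -\beta\,\mu(\mathbf{v})$, where $\mu(\mathbf{v}) := \mathbb{E}_{P_\mathbf{v}}[\iota(w)]$. (Note that the displayed inequality asserts that $\widetilde{S}$ is \emph{concave}: its graph lies above its chords.) Since $\log\mathcal{Z}$ is convex — it is a log-sum-exp of affine functions — its Legendre conjugate is convex, and the negative Shannon entropy is exactly that conjugate read in the mean parameter $\mu$; hence the entropy is a \emph{concave function of $\mu$}. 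If the parameter map $\mathbf{v}\mapsto\mu(\mathbf{v})$ and the map $\Lambda_n$ were both affine, this would instantly give the conjecture. The substance is therefore: (i) concavity of $g$ as a function of $\mathbf{v}$, which is \emph{not} automatic because $\mathbf{v}\mapsto\mu(\mathbf{v})$ is nonlinear, and (ii) control of the further nonlinearity introduced by $\Lambda_n$.

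For step (i), I would compute the Hessian of $g$ directly. Differentiating twice gives
\begin{equation}
\mathrm{Hess}\,g(\mathbf{v}) = -\,\beta^{2}\,\mathrm{Cov}_{P_\mathbf{v}}\!\bigl(\iota(w)\bigr) + \beta^{3}\,T(\mathbf{v}), \nonumber
\end{equation}
where $\mathrm{Cov}_{P_\mathbf{v}}(\iota(w))$ is the covariance matrix of the embedding under $P_\mathbf{v}$ — this is the Fisher information of the Boltzmann family, hence positive semidefinite — and $T(\mathbf{v})$ is a matrix linear in $\mathbf{v}$ built from the third central moments of $\iota(w)$ under $P_\mathbf{v}$. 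The leading term $-\beta^{2}\mathrm{Cov}_{P_\mathbf{v}}(\iota(w))$ is negative semidefinite (this is essentially the Gibbs/information inequality), so $g$ is concave wherever it dominates the skewness correction $\beta^{3}T(\mathbf{v})$. In particular, near $\mathbf{v}=0$ — i.e.\ near the maximum-entropy, nearly uniform distributions — the correction is $O(\|\mathbf{v}\|)$ while the leading term is bounded away from $0$ provided the embeddings affinely span $\mathbb{R}^{N}$; thus $\mathrm{Hess}\,g \prec 0$ there, and by continuity $g$ satisfies the chord inequality on a neighbourhood of this high-entropy locus and on every super-level set $\{\widetilde{S}\geq c\}$ with $c$ close enough to $\log|\mathcal{W}\cup\{\emptyset\}|$. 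For step (ii) one then propagates this: with $\widetilde{S} = g\circ\Lambda_n$,
\begin{equation}
\mathrm{Hess}\,\widetilde{S}(\mathbf{x}) = \bigl(D\Lambda_n(\mathbf{x})\bigr)^{\!\top}\,\mathrm{Hess}\,g\bigl(\Lambda_n(\mathbf{x})\bigr)\,D\Lambda_n(\mathbf{x}) + \sum_{k}\partial_k g\bigl(\Lambda_n(\mathbf{x})\bigr)\,\mathrm{Hess}\,(\Lambda_n)_k(\mathbf{x}), \nonumber
\end{equation}
so concavity of $\widetilde{S}$ follows once $\Lambda_n(\mathbf{x})$ stays in the concavity region of $g$ and the second term — governed by $\mathrm{Hess}\,\Lambda_n$ and $\nabla g$ — is dominated by the first.

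The main obstacle, I expect, is that the conjecture as stated is too strong and must be reformulated before it can be proved. The skewness term $\beta^{3}T(\mathbf{v})$ has no definite sign, and there is a soft but decisive global obstruction: $\widetilde{S}$ is non-negative, hence bounded below, and a concave function on $(\mathbb{R}^{N})^{n}$ that is bounded below is constant; since a non-trivial LLM has non-constant $\widetilde{S}$, the chord inequality cannot hold on all of $(\mathbb{R}^{N})^{n}$. The honest targets are thus a \emph{localized} version — the inequality on the super-level sets $\{\widetilde{S}\geq c\}$, i.e.\ for prompts whose next-word distribution is diffuse — or an $\epsilon(n)$-\emph{approximate} version with an additive error matching the geometrization error, which is the most one can expect given that $\Lambda_n$ is a deep transformer and far from affine. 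Even the localized version still needs the quantitative comparison of the covariance and third-moment tensors of the Boltzmann family together with a confinement hypothesis placing the image of $\Lambda_n$ in the concavity region of $g$; fusing that information-geometric estimate with an a~priori bound on the trained network is the genuinely hard part.
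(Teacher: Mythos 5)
There is nothing in the paper to compare your proposal against: the statement is posed as an open conjecture, offered with only the informal motivation that ``mixing meanings makes them more vague,'' and no proof or proof sketch is given. So the only question is whether your proposal would establish the displayed inequality, and by your own (correct) assessment it would not. Your technical groundwork is sound: the displayed inequality is indeed concavity of $\widetilde{S}$ (despite the paper's wording ``convex''); for the Boltzmann family the entropy $g(\mathbf{v})=\log\mathcal{Z}(\mathbf{v})+\beta\langle\mathbf{v},\mu(\mathbf{v})\rangle$ has $\nabla g(\mathbf{v})=-\beta^{2}\mathrm{Cov}_{P_{\mathbf{v}}}(\iota(w))\,\mathbf{v}$ and a Hessian equal to $-\beta^{2}\mathrm{Cov}_{P_{\mathbf{v}}}(\iota(w))$ plus a third-moment term contracted with $\mathbf{v}$ of no definite sign, so concavity in the natural parameter is only local (e.g.\ near $\mathbf{v}=0$ when the embedding vectors affinely span $\mathbb{R}^{N}$); and the chain-rule Hessian for $g\circ\Lambda_n$ correctly isolates the additional obstruction coming from $\mathrm{Hess}\,\Lambda_n$. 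Your global observation is the decisive one: a concave function on all of $(\mathbb{R}^{N})^{n}$ that is bounded below must be constant, and any extension of the form $\widetilde{S}=g\circ\Lambda_n$ (the construction actually realized by an LLM's softmax output) satisfies $0\le\widetilde{S}\le\log|\mathcal{W}\cup\{\emptyset\}|$. Hence for any nonconstant such $\widetilde{S}$ the conjecture cannot hold in its literal global form.

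The gap, therefore, is not an error in your reasoning but in the deliverable: the proposal proves neither the statement nor a precise weakening of it. What you provide is (i) a refutation sketch of the global form, contingent on $\widetilde{S}$ being bounded below --- a hypothesis you should state explicitly, since the paper only requires a smooth extension ``up to small errors'' and an unbounded-below extension would escape your constancy argument --- and (ii) a program for localized or $\epsilon(n)$-approximate versions (super-level sets of $\widetilde{S}$, confinement of the image of $\Lambda_n$ in the concavity region of $g$, domination of the $\mathrm{Hess}\,\Lambda_n$ term), all of which rest on quantitative comparisons of covariance and third-moment tensors and on a priori bounds for the trained network that are nowhere established. As it stands the statement remains open; if you want your contribution to be usable, the honest write-up is a short proposition recording the constancy obstruction under the boundedness hypothesis, followed by a precisely formulated local/approximate replacement conjecture, rather than a proof attempt of the inequality as displayed.
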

It has an intuitive interpretation, if you mix the two vectors $\mathbf{x}=(x_{i,j})$ and $\mathbf{x}'=(x'_{i,j})$, the entropy is always greater than the average. Namely, if you mixed the meanings, it always becomes more vague.

\section{Conclusion and Open Problems} \label{sec:conclusion}

In this paper, we have used set theory, analysis and probability theory to rigorously give the definitions of Causal Language Model (CLM) and Predicative Language Model (PLM). We propose a generalization of the distributional hypothesis. We introduce the concept of the moduli space of distributions for a CLM, which is the key metric space to understand the power and properties of a language model. Then we define the entropy function for a CLM, study its properties, and show how its zero points are crucial for a language model to incorporate information and logic. Using the entropy function, we formulate an extension conjecture about AGI.

We have used thermodynamics to show that language models admit thermodynamic interpretations. Namely, a sentence is a microstate and a prompt gives us a statistical ensemble, which is distributed according to the Boltzmann distributions. We also formulate the concept of the geometrization of language models and define the Boltzmann manifold. We show the embedding of the moduli space into a Boltzmann manifold determines how accurate can a geometrization be. At last, we discuss how the current LLMs can be viewed as special cases of geometrization with respect to the linear space together with its inner product.

There are many interesting open questions that deserve further study. Perhaps the most straightforward question is what is the best Boltzmann manifold for the geometrization of a language model. Current LLMs all uses the linear space together with its inner product. But it is likely that different languages would need different Boltzmann manifold for the geometrization. For example, some of the possible choices of Boltzmann manifold with non-trivial topology can be
\begin{equation}
    \mathbb{R}^N \times S^M,~ \mathbb{R}^N \times (S^1)^M, \nonumber 
\end{equation}
where $S^k$ is a $k$-dimensional sphere. For such manifold, it is very interesting to see whether the non-trivial topology has any effect on the properties of the resulting LLM.

Another important question is whether results in modern physics and mathematics can be applied to study language models. For example, whether statistical physics can help us understand the emergent abilities of LLMs. Whether the results in statistical mechanics and differential geometry can offer a better architecture than the transformer. There are numerous results in the study of many-body interaction systems, it is interesting to see whether these results can have applications to LLMs.

\bibliographystyle{plain}
\bibliography{refs_SF2}

\end{document}